%
\documentclass[runningheads]{llncs}

\usepackage[T1]{fontenc}
\def\doi#1{\href{https://doi.org/\detokenize{#1}}{\url{https://doi.org/\detokenize{#1}}}}
\usepackage{graphicx}
%
%
\usepackage{listings}
\lstset{language=Pascal}

\usepackage{anyfontsize}
\usepackage{multirow}
\usepackage{graphicx} 
\usepackage{comment}
\usepackage{color}
\usepackage{algorithm}
\usepackage[noend]{algpseudocode} 
\usepackage{amssymb,amsmath}
\usepackage{hyperref}

\usepackage{ifthen}
\newboolean{shortver}
\setboolean{shortver}{true}


\begin{document}
\title{A Lower Bounding Framework for Motion Planning amid Dynamic Obstacles in 2D}
%
%

\author{Zhongqiang Ren\inst{1}
\and
Sivakumar Rathinam\inst{2}
\and
Howie Choset\inst{1}
}

\institute{Carnegie Mellon University, Pittsburgh, PA 15213, USA
\email{\{zhongqir,choset\}@andrew.cmu.edu}\\
\and
Texas A\&M University, College Station, TX 77843-3123.
\email{srathinam@tamu.edu}}
\maketitle              

%

\newcommand{\red}{\color{red}}
\newcommand{\green}{\color{green}}
\newcommand{\blue}{}

\thispagestyle{plain}
\pagestyle{plain}
\pagenumbering{arabic}

\begin{abstract}
	This work considers a Motion Planning Problem with Dynamic Obstacles (MPDO) in 2D that requires finding a minimum-arrival-time collision-free trajectory for a point robot between its start and goal locations amid dynamic obstacles moving along known trajectories. Existing methods, such as continuous Dijkstra paradigm, can find an optimal solution by restricting the shape of the obstacles or the motion of the robot, while this work makes no such assumptions. Other methods, such as search-based planners and sampling-based approaches can compute a feasible solution to this problem but do not provide approximation bounds. Since finding the optimum is challenging for MPDO, this paper develops a framework that can provide tight lower bounds to the optimum. These bounds act as proxies for the optimum which can then be used to bound the deviation of a feasible solution from the optimum. To accomplish this, we develop a framework that consists of (i) a bi-level discretization approach that converts the MPDO to a relaxed path planning problem, and (ii) an algorithm that can solve the relaxed problem to obtain lower bounds. We also present numerical results to corroborate the performance of the proposed framework. These results show that the bounds obtained by our approach for some instances are up to twice tighter than a baseline approach showcasing potential advantages of the proposed approach. 


	\keywords{Motion Planning with Dynamic Obstacles \and Optimality Bounds.}
\end{abstract}

\section{Introduction}

\graphicspath{{figures/}}

The motion planning problem of finding a collision-free trajectory for a robot in the presence of obstacles is one of the most fundamental problems in Robotics \cite{choset2005principles,halperin2017algorithmic,latombe2012robot,lavalle2006planning}. In this article, we consider a Motion Planning Problem with Dynamic Obstacles (MPDO) in 2D where the goal is to find a collision-free trajectory for a point robot between its start location and destination in the presence of dynamic obstacles such that the arrival time of the robot is minimized. The obstacles can be of arbitrary shapes moving along known trajectories in the workspace.
If the obstacles are static, depending on the shape of the obstacles, there are several algorithms to either find an optimal or bounded sub-optimal solutions in polynomial time \cite{choset2005principles,latombe2012robot,lavalle2006planning}.
In the presence of dynamic obstacles, this motion planning problem is known to be NP-Hard~\cite{canny1987new}.

There is currently no algorithm available in the literature to find an optimal solution for the MPDO with generic obstacles following known but arbitrary trajectories.
There are methods, however, for finding an optimal solution to some special cases of MPDO. By assuming that obstacles are convex polygons moving along fixed directions with constant speeds, minimum-time trajectories can be found by constructing an 
``accessibility graph''~\cite{fujimura1993planning} in 3D with time added as the third dimension.
If the shapes of the obstacles are polygons, continuous Dijkstra paradigms~\cite{fujimura1994motion,hershberger1999optimal} can be leveraged to compute an optimal solution. When both the obstacles and the path of the robot are rectilinear, an improved version of the continuous Dijkstra paradigm with reduced time complexity has been developed recently in \cite{maheshwari2020shortest}. While these methods find optimal solutions in theory, we are not aware of any implementations of these algorithms or any numerical results on the performance of these algorithms.

While finding the optimum is challenging, there are many algorithms in the literature that can compute a \emph{feasible} solution for MPDO. By discretizing the workspace into a graph, search-based planners~\cite{phillips2011sipp} can find an optimal solution within the graph, whose quality is determined by the resolution of the discretization.
Path-velocity decomposition methods~\cite{kant1986toward} can also be leveraged to find a feasible solution by first finding a path among static obstacles and then finding the speeds along the path to avoid the dynamic obstacles. Sampling-based methods \cite{frazzoli2002real,hsu2002randomized} have been applied for generalizations of MPDO with motion constraints and can be used to find feasible solutions. While one can find feasible solutions, there are currently no {\it a-posteriori} or {\it a-priori} bounds that quantify the deviation of these feasible solutions from the optimum. 


In the absence of methods for finding the optimum, lower bounds can be used as {\it proxies} to the optimum. Baseline approaches that find trivial lower bounds are always possible, i.e., by either removing the dynamic obstacles or relaxing the collision avoidance constraints. However, these bounds tend to be far away from the optimum, and therefore, do not serve as good estimates of the optimum. The goal of this paper is to develop a framework that can provide tight estimates of the optimum to the MPDO in 2D (Fig. \ref{lbmp:fig:discretization}). While doing so, we make no assumptions on the shape of the dynamic obstacles or their motion. The framework we develop consists of two parts.
In the first part, we present a bi-level discretization approach that relaxes some of the constraints of the MPDO and converts it to a lower bounding path planning problem on a discrete graph, which is the main contribution of this work. The second part then solves the path planning problem using an A*-based~\cite{astar} algorithm tuned to handle the constraints in the lower bounding problem.

The proposed bi-level discretization approach (Fig. \ref{lbmp:fig:discretization}) simultaneously handles the collision avoidance constraints and the generation of tight lower bounds. The higher level of the discretization approach partitions the 2D workspace into cells. This level allows us to relax and formulate the collision avoidance constraints between the robot and the dynamic obstacles. The lower level divides each boundary of a cell into smaller sub-segments allowing a robot to either wait on a sub-segment or travel only between two sub-segments belonging to a cell. Each sub-segment is then treated as a node and the lower bounding problem is formulated as a search on a graph consisting of all the nodes representing the sub-segments. While the size of the cells at the higher level controls how the collision avoidance constraints are strictly enforced, the size of the sub-segments at the lower level determines the closeness of the computed trajectories to an optimal solution. We theoretically show that our framework generates lower bounds and then present numerical results to corroborate the performance of the proposed approach.

\begin{figure}[H]
	\centering
	\includegraphics[width=0.7\linewidth]{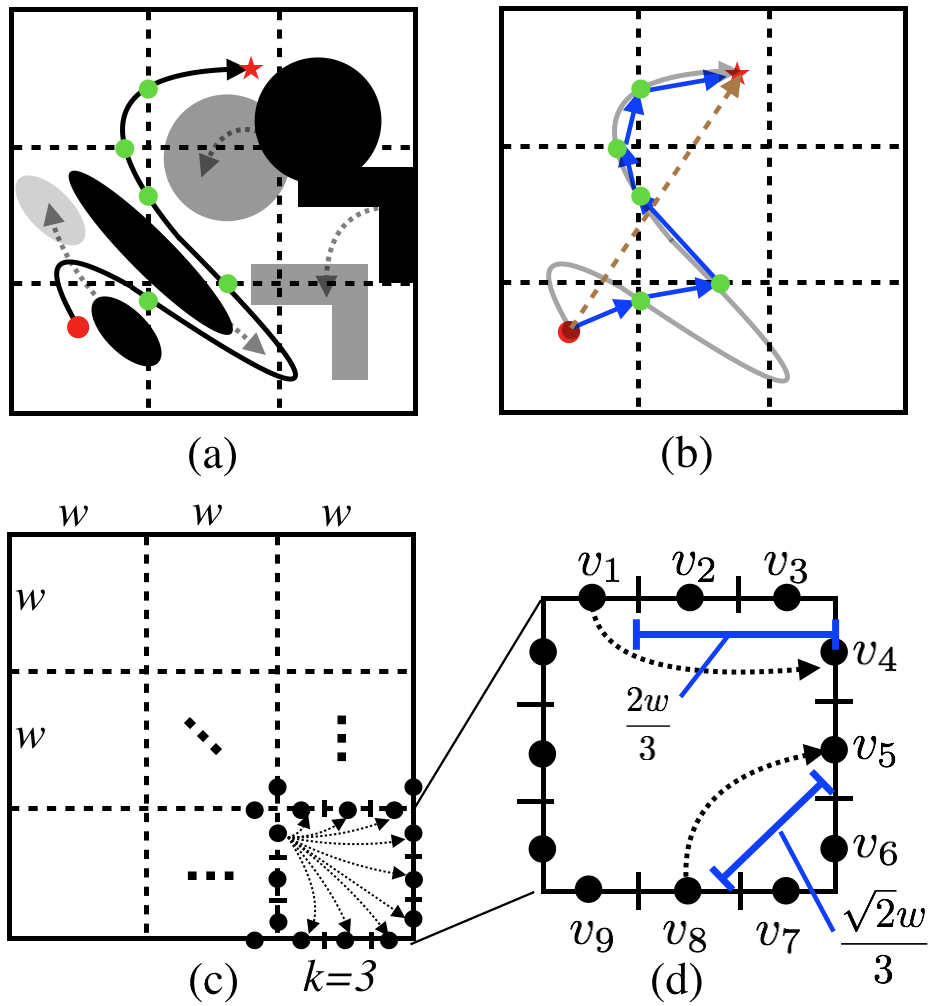}
	\caption{(a) An illustration of MPDO in 2D. The robot moves from the red dot (the starting location) to the red star (the goal location) along a collision-free path. (b) The brown dashed line visualizes the lower bounding solution computed by a baseline method ignoring the dynamic obstacles (Sec.~\ref{lbmp:sec:result}), which is not tight in general. The blue lines illustrate the lower bounding solution of our approach.
	(c) Bi-level discretization: the workspace is divided at the higher level into cells of size $w$. At the lower level, the boundaries of each cell is divided into $k$ sub-segments of equal size. Each sub-segment forms a node in a graph and the lower bounding problem is formulated on this graph. (d) Examples of lower bounding distances between vertices (i.e., sub-segments).
	By tuning the hyper-parameters $w,k$, the tightness of the lower bound can be adjusted at the expense of computational cost.}
	\label{lbmp:fig:discretization}
\end{figure}

\section{Preliminaries}\label{lbmp:sec:preliminaries}

This work considers a workspce $\mathcal{W}=[0,L]\times[0,L]$ and continuous time $t \in [0,T]$, with $L$ and $T$ being finite positive real numbers.
Each obstacle is moving along some known trajectory within the time range $[0,T]$, with $N_{obs}$ denoting the total number of obstacles.
Obstacles can have different and arbitrary shapes, and can overlap with each other at any time.
{\blue Note that the aforementioned notion of obstacles include static obstacles, which follow trajectories that stay in place within $[0,T]$.}
Let $O^i(t)\subset \mathcal{W}, i=1,2,\dots,N_{obs}$ denote the subset of the workspace occupied by the $i$-th obstacle at time $t$. Let $s,d \in \mathcal{W}$ denote the start and destination of a point robot respectively.
Let $p:[0,T]\rightarrow \mathcal{W}$ represent a trajectory of the robot from $s$ to $d$, and $p$ is collision-free if the robot does not enter the interior of any obstacle at any time.
The robot can either wait in place or move in any direction with speed no larger than $V_{max}$.
Let $p^*$ denote a collision-free trajectory with the minimum arrival time $C^*$ at $d$, and $C^*$ is also referred to as the optimal cost. The Lower Bounding Problem (LBP) aims to compute a lower bound (i.e., underestimate) of $C^*$. In Sec. \ref{lbmp:sec:generate_lbp}, we discuss the bi-level discretization approach that leads to the formulation of the LBP. In Sec. \ref{lbmp:sec:algo} we then present an algorithm called LB-A$^*$ to solve the LBP.



\section{A Lower Bounding Problem Formulation}\label{lbmp:sec:generate_lbp}

\graphicspath{{figures/}}


\subsection{Bi-Level  Discretization}\label{lbmp:sec:generate_lbp:discretize}

\subsubsection{Graph Vertices:}
As shown in Fig.~\ref{lbmp:fig:discretization} (c), at the higher level, $\mathcal{W}$ is decomposed into $n \times n$ (squared) \emph{cells} of size $w\times w$ ($n=L/w$).
Each cell is enclosed by four \emph{boundaries} or \emph{line segments} that are perpendicular to each other.
At the lower level, a line segment is evenly divided into $k$ \emph{sub-segments} of length $\frac{w}{k}$.
Both line segments and sub-segments are closed sets of points (i.e., including the ending point).
Two sub-segments are said to be \emph{next} to each other if they have one common ending point.
Let $V_o$ denote a set of (graph) vertices, where each vertex corresponds to a sub-segment in a cell.
Let $V:=V_o \cup \{s,d\}$, where both $s$ and $d$ can be regarded as a special sub-segment containing only a single point.

Let $LS(v)$ (and $SS(v)$), $\forall v\in V$ denote the set of points within the line segment (and the sub-segment respectively) corresponding to $v$. Note that this definition includes the cases where $v=s$ and $v=d$: $LS(s)=SS(s)=\{s\}$ and $LS(d)=SS(d)=\{d\}$.
Also, let $\mathcal{LS} := \bigcup_{v\in V}LS(v)$ represent the set of all points that lie on any line segment. Similarly, let $\mathcal{SS} := \bigcup_{v\in V}SS(v)$, and note that $\mathcal{SS}=\mathcal{LS}$.

\subsubsection{Graph Edges:}
{\blue For any pair of distinct vertices $v_i,v_j \in V$ within the same cell, $v_i$ and $v_j$ are connected with an (un-directed) edge if they do not belong to the same line segment (i.e., $LS(v_i)\neq LS(v_j)$).
For example, in Fig.~\ref{lbmp:fig:discretization} (d), vertices $v_2,v_3$ are not connected since they belong to the same line segment, and $v_3, v_4$ are connected as they belong to different line segments.
More discussion on the edge connectivity is presented in Sec.~\ref{lbmp:sec:discussion}.}
The edge joining any pair of vertices $(v_i,v_j)$ represents a move action of the robot between $(v_i,v_j)$.
Since each vertex $v\in V$ corresponds to a unique sub-segment $SS(v)$, to simplify the presentation, we also say an edge $(v_1,v_2)$ connects two sub-segments $SS(v_1)$ and $SS(v_2)$. Let $E_{move}$ denote the set of all edges connecting a pair of sub-segments in all the cells.


\subsubsection{Edge Costs:}
The cost of an edge $(v_1,v_2) \in E_{move}$ is defined to be an \emph{underestimate} of any possible transition time (i.e., cost) between any pair of points in the respective sub-segments $SS(v_1),SS(v_2)$ connected by the edge.
Formally, for an edge $e=(v_1,v_2) \in E_{move}$,
\begin{gather}\label{lbmp:eqn:edge_cost}
    cost(e) := \frac{\min_{x_1,x_2} || x_1 - x_2 ||}{ V_{max} }, x_1 \in SS(v_1), x_2 \in SS(v_2).
\end{gather}
A few examples are shown in Fig.~\ref{lbmp:fig:discretization} (d): $cost(v_3,v_4)=0$, $cost(v_1,v_4)=2w/3$ and $cost(v_8,v_5)=\sqrt{2}w/3$.



\subsection{Reachable Time Intervals and Self-Loops}\label{lbmp:sec:generate_lbp:reachable}

To consider the collision avoidance requirement between the robot and the obstacles (both static and dynamic obstacles), for each vertex $v \in V$ and a time point $t\in [0,T]$, vertex $v$ and $LS(v)$ are said to be \emph{non-reachable} if {\blue the entire line segment lies inside the union of obstacles at time $t$ (i.e., $LS(v) \subseteq \bigcup_{i \in \{1,2,\dots,N_{obs}\}}O^i(t)$).}
Otherwise, vertex $v$ and $LS(v)$ are said to be \emph{reachable} at time $t$.
Additionally, for each vertex $v \in V$, the sub-segment $SS(v)$ is said to be reachable (or non-reachable) if $LS(v)$ is reachable (or non-reachable).
{\blue Here, the possible collision between the robot and any obstacle is only considered at vertices and is ignored along all edges (i.e., ignored during the transition between vertices).
This notion of collision defined in the graph relaxes the collision avoidance requirement in the original continuous problem, and is hereafter referred to as \textbf{Relaxation-1}.}
Note that this relaxation is allowed since our objective is to find a lower bounding solution of $C^*$, instead of a feasible solution.

\begin{figure}[t]
	\centering
	\includegraphics[width=0.75\linewidth]{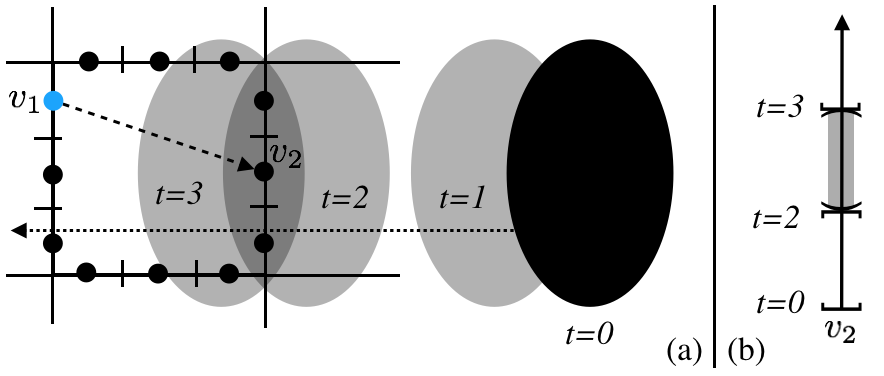}
	\caption{(a) An obstacle (the black oval) that moves from the right to the left along some known trajectory. (b) The reachable intervals of vertex $v_2$ are $[0,2],[3,T]$ while the non-reachable interval is $(2,3)$.}
	\label{lbmp:fig:intervals}
\end{figure}

Let a \emph{reachable time interval} $Itv(v), v \in V$ be a maximal contiguous time range $[t^a,t^b]$, where $v$ is reachable at any time point $t\in [t^a,t^b]$.
Given the trajectories of all the obstacles, the reachable time intervals of all vertices can be computed.
In general, there are multiple (for example $J$, a finite positive integer) reachable time intervals at a vertex $v$, and we use subscript $j$ in notation $Itv_j(v), j=1,2,\dots,J$ to denote each of these reachable intervals at vertex $v$.
The robot can only reach a vertex $v \in V$ if the arrival time $t$ is within some $Itv_j(v)$.
The complements of reachable intervals are called \emph{non-reachable intervals}.
For example, in Fig.~\ref{lbmp:fig:intervals}, the reachable intervals of vertex $v_2$ are $[0,2],[3,T]$, and the only non-reachable interval of $v_2$ is $(2,3)$.

After the robot has reached a vertex, the robot is allowed to stay at this vertex for an arbitrary amount of time, ignoring any non-reachable interval at this vertex in the future.
This is also a relaxation of the collision avoidance constraint between the robot and the obstacles, which is hereafter referred to as \textbf{Relaxation-2}.
For example, in Fig.~\ref{lbmp:fig:intervals}, if the robot reaches $v_2$ at a time point within $[0,2]$, then the robot is allowed to stay at $v_2$ during the non-reachable interval $(2,3)$.

The wait action of the robot can be described as a special type of edge, the \emph{self-loop} in the graph, whose cost can be any positive number (indicating the wait time of the robot at the vertex) and needs to be determined during the planning process.
As presented in Sec.~\ref{lbmp:sec:algo}, the proposed planner that solves the LBP will determine the self-loop cost during the planning process.
Finally, let $E_{wait}$ denote the set of all self-loops corresponding to all vertices in $V$.

\vspace{2mm}
{\blue
\noindent\textbf{Remark.} The reachable time intervals are similar to the notion of ``safe intervals'' in SIPP~\cite{phillips2011sipp}.
The main difference is the newly introduced Relaxation-1,2, which allow us to obtain lower bounds by relaxing the collision avoidance constraint at a vertex or during the transition between vertices and intervals.
Since we are looking for lower bounding solutions (which can collide with the obstacles in the continuous space and time and thus ``unsafe''), we use the term ``reachable time intervals'' to highlight the difference.
}


\subsection{Lower Bounding Problem Definition}\label{lbmp:sec:generate_lbp:def}
Let edge set $E:=E_{move}\cup E_{wait}$, and define graph $G:=(V,E)$.
Let $Itv(G)$ denote the set of all reachable intervals of all vertices in $G$.
When parameters $(w,k)$ are given, the corresponding graph $G$ as well as $Itv(G)$ are well defined, which specifies a $(w,k)$-Lower Bounding Problem (referred to as $(w,k)$-LBP):
\vspace{1mm}
\begin{definition}
	A $(w,k)$-LBP requires finding a minimum-cost trajectory $p$ from $s$ to $d$ in $G$ such that the arrival time at each vertex $v$ along $p$ is within some reachable time interval $Itv(v)$.
\end{definition}
We will prove in Sec.~\ref{lbmp:sec:analysis} that the cost of an optimal solution trajectory to a $(w,k)$-LBP is guaranteed to be a lower bound of $C^*$, and hence the name ``lower bounding problem''.

\section{Lower Bounding A*}\label{lbmp:sec:algo}

\subsection{LB-A* Overview}
Given a $(w,k)$-LBP (which includes the corresponding graph $G$ and $Itv(G)$), we develop an A*-like~\cite{astar} graph search algorithm called LB-A* (Lower Bounding A*) to compute an optimal solution, which is shown in Alg.~\ref{lbmp:alg:lbastar}.

\begin{algorithm}[tb]
	\caption{Pseudocode for LB-A*}\label{lbmp:alg:lbastar}
	\begin{algorithmic}[1]
		\State{$g(v)\gets\infty, \forall v\in V$}
		\State{$g(s)\gets0$, and add $s$ to OPEN}
		\While{OPEN not empty} \Comment{Main search loop}
		\State{$v \gets$ OPEN.pop() }
		\State{\textbf{if} $v=d$ \textbf{then}}
		\State{\indent \textbf{return} \text{\textit{Reconstruct}($v$)}}
		\ForAll{$u \in$ \textit{Neighbor}($v$)}
		\State{$g' \gets$ \textit{EarliestReach}($v,u$)}
		\State{\textbf{if} $g(u) \leq g'$}
		\State{\indent \textbf{continue}}\Comment{End of this iteration}
		\State{$g(u) \gets g'$, $parent(u) \gets v$}
		\State{Add $u$ to OPEN}
		\EndFor
		\EndWhile \label{}
		\State{\textbf{return} Failure}
	\end{algorithmic}
\end{algorithm}

Similar to A*, let $g(v),\forall v\in V$ denote the cost-to-come (i.e., the earliest arrival time at $v$), which is initialized to $\infty$ for all vertices with the exception that $g(s)$ is set to zero.
Let $h(v)$ denote the heuristic value, which underestimates the cost-to-go from $v$ to the destination $d$.
Also, let OPEN denote a priority queue of candidate vertices that will be selected and expanded by the algorithm at any time during the search.
OPEN prioritizes candidate vertices $v$ based on their $f$-values, which are defined as $f(v):=g(v)+h(v)$.
Initially, $s$ is inserted into OPEN (line 2) with $f(s)=h(s)$ (since $g(s)=0$).

In each search iteration (lines 3-12), a vertex in OPEN with the minimum $f$-value is selected for expansion.
If $v$ is the same as $d$, a solution trajectory is found, which is reconstructed by iteratively backtracking the parent pointers of vertices.
The cost of this trajectory is guaranteed to reach the minimum.
Otherwise, $v$ is expanded by examining each of its neighboring vertices in $G$ (denoted as \textit{Neighbor($v$)}).

During the expansion, for each $u \in $ \textit{Neighbor($v$)} (lines 7-12), the earliest possible reachable time (denoted as $g'$) from $v$ to $u$ is computed via procedure \textit{EarliestReach}, which is explained later.
Then, $g'$ is used to update $g(u)$ if $g(u) > g'$, which means $g'$ leads to a cheaper trajectory to reach $u$ via $v$, and $u$ is inserted into OPEN for future expansion.

\subsection{Compute Earliest Reachable Time}
This section revisits reachable intervals and then presents the procedure \textit{EarliestReach}.
For each $v\in V$, the set of reachable intervals $\{Itv(v)\}$ can be pre-computed based on the trajectories of the obstacles, and by definition, no two intervals in $\{Itv(v)\}$ can overlap with each other.
For each $v$, sort $\{Itv(v)\}$ based on their starting time points from the minimum to the maximum, and denote the $j$-th interval in $\{Itv(v)\}$ as $Itv_j(v), j=1,2,\dots J$, where $J$ is a finite number.

To compute the earliest reachable time from vertex $v$ to $u$, \textit{EarliestReach}($v,u$) iteratively checks each reachable interval $Itv_j(u)=[t^a_j,t^b_j],j=1,2,\dots,J$ to find the first interval $Itv_{j'}(u)=[t^a_{j'},t^b_{j'}]$ such that the ending time point $t^b_{j'} \geq g(v)+cost(v,u)$.
The robot is guaranteed to be able to reach $u$ within this reachable time interval via waiting at $v$ and then moving to $u$, and the earliest reachable time is $\max\{t^a_{j'}, g(v)+cost(v,u)\}$.
In other words, when $t^a_{j'} > g(v)+cost(v,u)$, the robot waits at $v$ for an amount of time $t^a_{j'} - (g(v)+cost(v,u))$ and then moves from $v$ to $u$ using an amount of time $cost(v,u)$.
Note that the robot can wait at $v$ for an arbitrary amount of time (according to Relaxation-2), and the potential collision along the edge is ignored (according to Relaxation-1).




\section{Analysis}\label{lbmp:sec:analysis}

\graphicspath{{figures/}}

\subsection{Lower Bounds}
This section introduces some definitions and then shows that the solution cost to the aforementioned $(w,k)$-LBP problem is a lower bound of $C^*$.

\begin{definition}[Line Segment Indicator Function]
	Let $I_{LS}(x),x \in \mathcal{W}$ denote the line segment indicator function: 
	\begin{gather}
        I_{LS}(x)=
        \begin{cases}
        LS(x) &\mbox{if } x\in \mathcal{LS} \\
        \emptyset & \mbox{if } x\notin \mathcal{LS}.
        \end{cases}\nonumber
	\end{gather}
\end{definition}
Note that in the above definition, we abuse the notation a bit to simplify our presentation: here, notation $LS(x)$ denotes the line segment that contains point $x\in \mathcal{W}$ when $x$ is within some line segment, while in the previous section, notation $LS(v), v\in G$ is only defined over the vertices in $G$.

As a special case, if $x$ is an ending point of a line segment that is shared by multiple line segments, then $LS(x)$ can denote an arbitrary one of those line segments.
An illustration of $I_{LS}$ is shown in Fig.~\ref{lbmp:fig:proof} (a).

\vspace{1.5mm}
\begin{definition}[Arrival Times]\label{lbmp:def:arrival_times}
	When an optimal solution trajectory $p^*$ to the continuous problem exists, arrival times $\tau_j,j=0,1,\dots,j_{max}$ along $p^*$ are real numbers that are defined as follows. If $j=0$ then $\tau_0=0$; {\blue Otherwise $\tau_{j+1} = \inf\{t \,|$ $t>\tau_j$, $I_{LS}(p^*(t)) \neq \emptyset,$ $I_{LS}(p^*(t)) \neq I_{LS}(p^*(\tau_j)) \}$.}
\end{definition}
\vspace{1.5mm}
{\blue
In the above definition, condition $t>\tau_j$ guarantees $\tau_{j}$ increases when $j$ increases. Condition $I_{LS}(p^*(t)) \neq \emptyset$ ensures $p^*(\tau_{j})$ for any $j$ is within some line segment. 
Condition $I_{LS}(p^*(t)) \neq I_{LS}(p^*(\tau_j))$ ensures the corresponding line segments of $\tau_j$ and $\tau_{j+1}$ are not the same.}

\begin{figure}[t]
	\centering
	\includegraphics[width=0.85\linewidth]{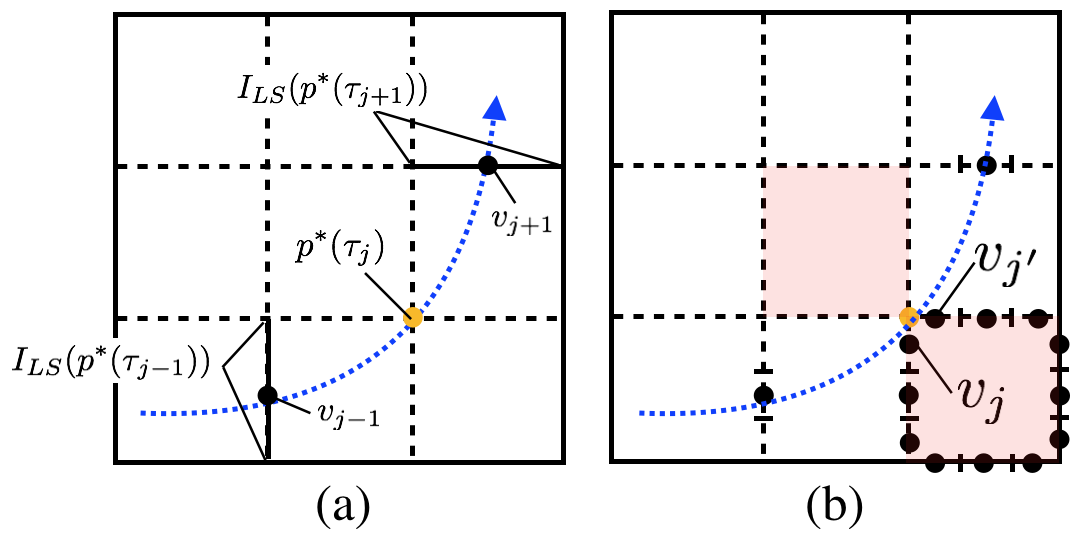}
	\caption{(a) visualizes the notion of the line segment indicator function. (b) visualizes an edge case, where the duplication trick is used to ensure that the corresponding sub-segments contained in two subsequent $I_{LS}(p^*(\tau_j))$ are connected by an edge in $G$. See text for more details.}
	\label{lbmp:fig:proof}
\end{figure}

There is a special case that requires additional discussion before introducing Lemma~\ref{lbmp:lem:p_in_G_exists}, which is conceptually visualized as the yellow point in Fig.~\ref{lbmp:fig:proof} (a).
When $p^*$ goes through a (corner) point $x$ that is shared by four adjacent cells, a ``duplication'' trick is required to make sure that the corresponding sub-segments contained in two subsequent $I_{LS}(p^*(\tau_j))$ are connected by an edge in $G$.
Specifically, let $I_{LS}(p^*(\tau_j))$ denote a line segment that is within the same cell as $I_{LS}(p^*(\tau_{j-1}))$.
Then, if $I_{LS}(p^*(\tau_j))$ is not within the same cell as $I_{LS}(p^*(\tau_{j+1}))$, duplicate an additional time point $\tau_j'=\tau_j$ and let $I_{LS}(p^*(\tau_j'))$ denote the line segment that is within the same cell as $I_{LS}(p^*(\tau_{j+1}))$.
As an illustration, in Fig.~\ref{lbmp:fig:proof} (b), the line segment corresponding to $v_j$ is such a choice (not the only choice) for $I_{LS}(p^*(\tau_j))$, and the line segment corresponding to $v_{j'}$ is the duplication. (Note that the cost of edge $(v_j,v_{j'})$ is zero.)

\begin{lemma}\label{lbmp:lem:p_in_G_exists}
	When $p^*$ exists, a corresponding trajectory $p$ in $G$ can be constructed such that every two subsequent vertices in $p$ are connected by an edge in $G$.
\end{lemma}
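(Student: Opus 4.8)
The plan is to construct the trajectory $p$ in $G$ explicitly from the arrival times $\tau_j$ defined in Definition~\ref{lbmp:def:arrival_times}, and then verify that consecutive vertices it visits are genuinely joined by an edge of $G$. First I would define, for each index $j=0,1,\dots,j_{max}$, a vertex $v_j \in V$ whose sub-segment $SS(v_j)$ contains the point $p^*(\tau_j)$; such a vertex exists because the condition $I_{LS}(p^*(\tau_j)) \neq \emptyset$ in the definition guarantees $p^*(\tau_j) \in \mathcal{LS} = \mathcal{SS}$, so $p^*(\tau_j)$ lies on some sub-segment. For the endpoints I would set $v_0 = s$ and the final vertex to $d$, using the convention $LS(s)=SS(s)=\{s\}$ and $LS(d)=SS(d)=\{d\}$ from the preliminaries. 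The candidate trajectory $p$ in $G$ is then the sequence $v_0, v_1, \dots, v_{j_{max}}$ (with the duplicated vertices $v_j'$ inserted wherever the corner-point situation arises).

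The core of the argument is checking the edge condition: every two subsequent vertices $v_j, v_{j+1}$ must be connected by an edge in $G$. By the edge definition in Sec.~\ref{lbmp:sec:generate_lbp:discretize}, two distinct vertices in the same cell are connected precisely when they lie on different line segments. So I would argue two things. First, $v_j$ and $v_{j+1}$ belong to a common cell: since $p^*$ is continuous and moves at finite speed, between the consecutive arrival times $\tau_j$ and $\tau_{j+1}$ the trajectory does not cross any intermediate line segment (by the infimum construction, $\tau_{j+1}$ is the \emph{first} time after $\tau_j$ at which $p^*$ touches a line segment different from $I_{LS}(p^*(\tau_j))$), hence the relevant portion of $p^*$ stays within the interior of one cell (plus its boundary), placing both endpoints on the boundary of that cell. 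Second, $LS(v_j) \neq LS(v_{j+1})$, which is exactly the condition $I_{LS}(p^*(\tau_{j+1})) \neq I_{LS}(p^*(\tau_j))$ baked into the definition of $\tau_{j+1}$.

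The main obstacle is the corner-point edge case, and I would handle it precisely via the duplication trick already set up in the text preceding the lemma. The difficulty is that when $p^*$ passes through a point $x$ shared by four cells, the single point $p^*(\tau_j)$ lies on up to four line segments, and the ``natural'' choice of $I_{LS}(p^*(\tau_j))$ may sit in a cell that contains $I_{LS}(p^*(\tau_{j-1}))$ but \emph{not} $I_{LS}(p^*(\tau_{j+1}))$; then $v_j$ and $v_{j+1}$ would live in different cells and there would be no edge between them. To repair this I would insert the duplicate $\tau_j' = \tau_j$ with $I_{LS}(p^*(\tau_j'))$ chosen to be a line segment in the same cell as $I_{LS}(p^*(\tau_{j+1}))$, yielding a vertex $v_j'$ that is co-cellular with both $v_j$ and $v_{j+1}$. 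I would then note that $(v_j, v_j')$ is a valid zero-cost edge (the two line segments share the corner point $x$, so the minimizing distance in Eq.~\eqref{lbmp:eqn:edge_cost} is zero), and $(v_j', v_{j+1})$ is a valid edge by the same same-cell/different-line-segment reasoning as above. This restores edge-connectivity across the corner and completes the construction.

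A final remark I would make is that this lemma only asserts the \emph{existence} of a connected vertex sequence $p$ in $G$ tracking $p^*$; it deliberately says nothing yet about arrival times lying in reachable intervals or about the cost comparison. Those are separate obligations I would expect the subsequent lemma to discharge, so here I would keep the scope tight and not attempt to prove feasibility of $p$ as a $(w,k)$-LBP solution or the lower-bound inequality $\mathrm{cost}(p) \leq C^*$.
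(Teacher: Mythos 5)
Your construction is correct and is exactly the argument the paper intends: the paper states this lemma without a printed proof, but the preceding text (the arrival-time definition and the corner-point duplication trick) sets up precisely the vertex sequence and edge-checking you carry out. One small refinement: between $\tau_j$ and $\tau_{j+1}$ the trajectory need not stay in a single cell's interior (it may recross $LS(p^*(\tau_j))$, which bounds two cells); what matters is that the last cell interior visited before $\tau_{j+1}$ has both line segments on its boundary, which still gives the same-cell, different-line-segment condition needed for the edge.
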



\begin{lemma}\label{lbmp:lem:k_max_exists}
	When $p^*$ exists, there is a finite $j_{max}$ such that $p^*(\tau_{j_{max}})=d$.
\end{lemma}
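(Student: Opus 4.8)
The plan is to show that the strictly increasing sequence $\{\tau_j\}$ cannot be infinite and that its last element lands on the destination. First I would record two elementary facts. Since $p^*$ is an optimal (hence feasible) trajectory, it reaches $d$ at the finite time $C^*$, so every $\tau_j$ lies in the bounded interval $[0,C^*]$; and since the robot's speed is at most $V_{max}$, the map $p^*$ is $V_{max}$-Lipschitz, hence continuous and of finite total length at most $V_{max}C^*$. By Definition~\ref{lbmp:def:arrival_times} the $\tau_j$ are successive first-entry times into a line segment different from the current one, so $\{\tau_j\}$ is strictly increasing.

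Next I would argue by contradiction: suppose the sequence is infinite. Being increasing and bounded above by $C^*$, it converges to some $\tau_\infty \le C^*$, and by continuity $p^*(\tau_j) \to x_\infty := p^*(\tau_\infty)$. I would then locate $x_\infty$ geometrically. It cannot lie in the open interior of a cell, since a neighborhood there meets no line segment while every $p^*(\tau_j)$ lies on a line segment. It cannot lie in the relative interior of a single boundary segment $\ell$, since a small enough neighborhood meets only $\ell$ among all segments, forcing $I_{LS}(p^*(\tau_j)) = \ell$ for all large $j$ and contradicting that consecutive indicators must differ. Hence $x_\infty$ must be a grid corner, the only points incident to two or more line segments, and the accumulating transitions alternate among the at most four segments meeting there.

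The main obstacle is to rule out this residual corner-accumulation scenario, namely a trajectory oscillating among the segments at a corner with vanishing spatial excursions over shrinking time windows, which is by itself compatible with bounded speed and finite length. Here I would invoke the optimality of $p^*$: the oscillation near $\tau_\infty$ accumulates positive arc length without net progress, and any time the robot must delay it could instead wait in place (incurring no transitions) rather than oscillate. Formally, a sub-trajectory that repeatedly leaves and returns to an arbitrarily small neighborhood of the corner can be replaced by one that waits at a reachable point near the corner and then continues, yielding an arrival time no larger than $C^*$ with only finitely many transitions, contradicting the presumed infinitude. I expect this exchange/shortening argument, made rigorous against the reachable-interval and collision constraints at the waiting point, to be the delicate step; the duplication convention associated with Fig.~\ref{lbmp:fig:proof} then bounds the extra zero-cost transitions by a constant per genuine cell crossing.

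Finally, with finiteness established I would identify the terminal index. Since $p^*(C^*)=d$ and, by the special convention $LS(d)=\{d\}$, the destination is a line segment distinct from every grid segment, the time $C^*$ is itself one of the first-entry times in Definition~\ref{lbmp:def:arrival_times}; calling it $\tau_{j_{max}}$ gives $p^*(\tau_{j_{max}})=d$ with $j_{max}$ finite, as claimed.
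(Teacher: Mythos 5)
The paper states this lemma without any proof, so there is nothing of the authors' to compare against; your attempt is the only detailed argument on the table, and most of it is sound. The reduction is right: the $\tau_j$ of Definition~\ref{lbmp:def:arrival_times} are non-decreasing and bounded above by $C^*$, the limit point $x_\infty$ of an infinite sequence of arrival times cannot lie in a cell interior or in the relative interior of a single segment, and so must be a grid corner where several segments meet. (One small correction: the sequence need not be \emph{strictly} increasing --- $\tau_{j+1}=\tau_j$ can occur when $p^*$ sits at a corner --- but you only use monotonicity and boundedness, so this is cosmetic.)

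The genuine gap is exactly the step you flag as delicate, and it cannot be closed in the form you propose. Optimality of $p^*$ is measured by arrival time alone, not arc length, so an optimal trajectory is free to oscillate: in any instance with slack (say the robot must wait out a dynamic obstacle anyway), park the robot at a grid corner $c$ and let it make excursions of length $\epsilon/4^m$, $m=1,2,\dots$, alternately onto two of the segments meeting at $c$; the total detour time $\sum_m 2\epsilon/(4^m V_{max})$ is finite and can be absorbed into the slack, so the result is still an optimal trajectory, yet it generates infinitely many $\tau_j$. Hence the lemma, read literally for an \emph{arbitrary} optimal $p^*$, is false, and no contradiction can be derived from the presumed infinitude: your ``replace the oscillation by waiting'' move constructs a \emph{different} optimal trajectory rather than refuting a property of the given one. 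The correct repair is to recast the claim as a without-loss-of-generality statement --- there exists an optimal (or $\epsilon$-optimal) trajectory whose arrival-time sequence is finite --- which is all that Theorem~\ref{lbmp:thm:exist_corresponding_sol} actually needs. Even then the surgery requires care against the dynamic obstacles: waiting exactly at $c$ may be infeasible while the nearby oscillation is feasible (an obstacle can cover $c$ without covering the whole $\epsilon$-ball), so you must wait at a collision-free point of the original oscillation and verify that the re-timed suffix of the trajectory remains collision-free. None of this is carried out in your sketch, and it is where the entire content of the lemma lives.
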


\begin{theorem}\label{lbmp:thm:exist_corresponding_sol}
    Given a continuous problem with an optimal solution $p^*$ (with the minimum cost $C^*$), and a $(w,k)$-LBP with graph $G$, there exists a corresponding feasible solution $p$ in $G$ such that $cost(p) \leq C^*$.
\end{theorem}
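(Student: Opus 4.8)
The plan is to prove Theorem~\ref{lbmp:thm:exist_corresponding_sol} by explicitly constructing the trajectory $p$ in $G$ from the optimal continuous trajectory $p^*$, using the arrival times $\tau_j$ from Definition~\ref{lbmp:def:arrival_times} together with the duplication trick, and then showing two things: (i) that $p$ is a \emph{feasible} solution of the $(w,k)$-LBP, meaning every visited vertex is reached within one of its reachable intervals, and (ii) that $cost(p) \le C^*$. Lemmas~\ref{lbmp:lem:p_in_G_exists} and~\ref{lbmp:lem:k_max_exists} already supply the backbone of the construction: the former guarantees that consecutive vertices in the induced sequence are joined by edges in $G$, and the latter guarantees the sequence terminates at $d$ after finitely many steps. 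So the remaining work is to assign vertices to the times $\tau_j$ and to verify feasibility and the cost bound.

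First I would fix, for each $\tau_j$, the vertex $v_j \in V$ whose sub-segment $SS(v_j)$ contains the point $p^*(\tau_j)$; since $p^*(\tau_j)$ lies on the line segment $I_{LS}(p^*(\tau_j))$, it lies on some sub-segment of that segment, and $s,d$ are handled as the degenerate single-point sub-segments. This defines the vertex sequence of $p$, and by Lemma~\ref{lbmp:lem:p_in_G_exists} (with the duplication trick for corner crossings) consecutive vertices are edge-connected. I would then set the arrival time of $p$ at each $v_j$ to be exactly $\tau_j$ (the continuous arrival time of $p^*$ at the corresponding line segment), with duplicated vertices $\tau_j' = \tau_j$ inheriting the same time and being connected by a zero-cost edge.

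For feasibility, the key observation is that at each $\tau_j$ the robot $p^*$ is physically located at $p^*(\tau_j)$, which is a point on the line segment $LS(v_j)$ that is \emph{not} inside the union of obstacles (since $p^*$ is collision-free). Hence $LS(v_j) \not\subseteq \bigcup_i O^i(\tau_j)$, so by the definition of reachability $v_j$ is reachable at time $\tau_j$, i.e.\ $\tau_j$ lies in some reachable interval $Itv(v_j)$. This is exactly the feasibility requirement of the $(w,k)$-LBP, and it is where Relaxation-1 does the essential work: because collision is only checked at vertices and ignored along edges, I only need this containment-free condition to hold at the discrete instants $\tau_j$ rather than continuously. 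For the cost bound, between $\tau_j$ and $\tau_{j+1}$ the continuous robot travels from $p^*(\tau_j)\in SS(v_j)$ to $p^*(\tau_{j+1})\in SS(v_{j+1})$ in elapsed time $\tau_{j+1}-\tau_j$ at speed at most $V_{max}$, so that elapsed time is at least $\|p^*(\tau_j)-p^*(\tau_{j+1})\|/V_{max}$, which is in turn at least $cost(v_j,v_{j+1})$ by the $\min$ in the edge-cost definition~\eqref{lbmp:eqn:edge_cost}. Summing telescopically, the total edge cost $\sum_j cost(v_j,v_{j+1})$ is bounded above by $\sum_j (\tau_{j+1}-\tau_j) = \tau_{j_{max}} = C^*$; any wait actions inserted by the LBP solver only need to be accounted for as non-negative self-loop costs that still keep the arrival times equal to the $\tau_j$, so they do not increase the bound beyond $C^*$.

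The main obstacle I expect is the careful bookkeeping around the edge cases rather than the arithmetic: specifically, confirming that the duplication trick preserves both the edge-connectivity and the cost bound (the duplicated edge has cost zero and does not advance time, so it is harmless, but this must be argued cleanly), and confirming that the arrival times $\tau_j$ are genuinely non-decreasing and consistent with the waiting semantics of \textit{EarliestReach}. A subtle point worth treating explicitly is the corner-crossing situation from Fig.~\ref{lbmp:fig:proof}, where $p^*$ passes through a point shared by four cells: there the choice of which line segment represents $I_{LS}(p^*(\tau_j))$ must be compatible with the adjacent cells so that the move edges actually exist in $E_{move}$, and Lemma~\ref{lbmp:lem:p_in_G_exists} is precisely what licenses this. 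Once these degenerate transitions are handled, the feasibility claim and the telescoping cost inequality combine directly to give $cost(p) \le C^*$, completing the proof.
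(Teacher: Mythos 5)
Your proposal is correct and follows essentially the same route as the paper's proof: construct the vertex sequence from the arrival times $\tau_j$ of Definition~\ref{lbmp:def:arrival_times} (with the duplication trick), invoke Lemmas~\ref{lbmp:lem:p_in_G_exists} and~\ref{lbmp:lem:k_max_exists} for connectivity and termination, argue reachability of each $LS(v_j)$ at $\tau_j$ from the collision-freeness of $p^*(\tau_j)$, and bound $cost(v_j,v_{j+1})\leq\tau_{j+1}-\tau_j$ via Equation~\ref{lbmp:eqn:edge_cost} with waits filling the slack. The only (immaterial) difference is that the paper separately dispatches the trivial case $j_{max}=1$ where $s$ and $d$ lie in the same cell.
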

\begin{proof}
    By Lemma~\ref{lbmp:lem:k_max_exists}, $j_{max}$ is finite.
    If $j_{max}= 1$, then $s,d$ are within the same cell, and $\tau_j,j=0,1$ corresponds to $LS(s),LS(d)$ respectively.
    In this trivial case, a straight-line trajectory that directly connects $s,d$ exists and its cost is no larger than $C^*$.
    
    If $j_{max} > 1$, then $p^*$ must intersect with at least one line segment.
    For any $j\in\{0,1,\dots,j_{max}-1\}$, at time $\tau_j$, point $p^*(\tau_j)$ is collision-free, since the point is part of $p^*$ (which is a collision-free optimal trajectory to the continuous problem).
	Line segment $I_{LS}(p^*(\tau_j))$ is thus reachable.
	For the same reason, line segment $I_{LS}(p^*(\tau_{j+1}))$ is also reachable.
	By Lemma~\ref{lbmp:lem:p_in_G_exists}, a trajectory $p$ in $G$ corresponding to $p^*$ exists.
	Let $v_j$ and $v_{j+1}$ denote the vertices in the graph $G$ along $p$ such that $p^*(\tau_{j}) \in LS(v_{j})$ and $p^*(\tau_{j+1}) \in LS(v_{j+1})$.
	By definition in Sec.~\ref{lbmp:sec:generate_lbp:reachable}, the line segments $LS(v_{j})$ and $LS(v_{j+1})$ are reachable at time $\tau_j$ and $\tau_{j+1}$ respectively.
	In addition, $cost(v_j,v_{j+1}) \leq \tau_{j+1}-\tau_{j}$ (by Equation~\ref{lbmp:eqn:edge_cost}).
	The arrival times at each vertex along $p$ can be constructed by letting the robot reach each $v_{j+1}$ at time $\tau_{j+1}$ in $G$ via wait and move actions, for all $j=0,1,\dots,j_{max}-1$. (The wait time at each $v_j$ is $\tau_{j+1}-\tau_{j}-cost(v_j,v_{j+1})$.)
\end{proof}

\begin{lemma}\label{lbmp:lemma:lb_astar_optimal}
    Given a $(w,k)$-LBP, LB-A* computes an optimal trajectory in $G$ when heuristic values are admissible (i.e., $h(v), \forall v\in G$ underestimates the cost of an optimal trajectory from $v$ to $d$ in $G$).
\end{lemma}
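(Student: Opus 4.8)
The plan is to prove Lemma~\ref{lbmp:lemma:lb_astar_optimal} by reducing LB-A* to a standard best-first search on an implicit graph and then invoking the classical optimality guarantee of A* with admissible (indeed, consistent-like) heuristics. The central observation is that the $(w,k)$-LBP is precisely a shortest-path problem in a suitably defined state space: a state is a vertex $v\in V$, the $g$-value computed by the algorithm is the earliest reachable time at $v$ (subject to the reachable-interval constraints and Relaxation-1,2), and the transition cost from $v$ to $u$ is the effective edge cost that \textit{EarliestReach} returns, namely $\mathrm{EarliestReach}(v,u) - g(v)$. First I would make this state-space formulation explicit and argue that it satisfies the hypotheses under which A* is known to be optimal.

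The key steps, in order, would be as follows. \textbf{(i)} Establish that \textit{EarliestReach}$(v,u)$ correctly returns the minimum possible arrival time at $u$ over all feasible wait-then-move actions from a robot that has arrived at $v$ at time $g(v)$; this follows directly from the description in Sec.~\ref{lbmp:sec:algo}, since waiting is unrestricted (Relaxation-2) and the procedure selects the earliest reachable interval whose endpoint can still accommodate the arrival, taking the max with the interval's start. \textbf{(ii)} Show that the resulting arrival-time function is \emph{monotone along any path}: arrival times are non-decreasing as the path is extended, because each transition cost is nonnegative and waiting only delays arrival. This monotonicity is what lets the earliest-arrival-time objective behave like a standard additive shortest-path cost, so that Bellman-style optimality holds. \textbf{(iii)} Prove the standard A* invariant: whenever a vertex $v$ is popped from OPEN for expansion, its recorded $g(v)$ equals the true minimum earliest-reachable time from $s$ to $v$ in $G$ under the reachable-interval constraints. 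I would do this by the usual argument that, at the moment of expansion, some vertex on an optimal partial trajectory to $v$ lies in OPEN with its optimal $g$-value already established, and admissibility of $h$ guarantees that $v$ is not popped before that vertex has propagated the optimal value. \textbf{(iv)} Conclude that when $d$ is popped (line 5), $g(d)$ is the optimal cost of the $(w,k)$-LBP, and \textit{Reconstruct} recovers a corresponding optimal trajectory via the parent pointers.

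The main obstacle, and the place I would spend the most care, is step (iii) adapted to the time-indexed, interval-constrained setting, because the $(w,k)$-LBP is not a plain static-cost shortest-path problem: the admissible move from $v$ to $u$ depends on the \emph{arrival time} $g(v)$ through which reachable interval of $u$ is selected, so the effective edge cost is state-dependent rather than a fixed constant. I must therefore verify that the quantity being minimized still enjoys the subpath-optimality (optimal substructure) property required for A*'s correctness, i.e.\ that an earlier arrival at $v$ can never lead to a later earliest arrival at any successor $u$. This monotonicity of \textit{EarliestReach} in its $g(v)$ argument is exactly what rescues the classical proof; once it is established in step (ii), the interval constraints cause no further difficulty, and the remainder of step (iii) is the textbook A* optimality argument under an admissible heuristic. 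I would also note, for completeness, that admissibility alone (as stated in the lemma) suffices for optimality provided the algorithm re-opens vertices when a strictly smaller $g'$ is found, which the condition on line 9 (\texttt{if} $g(u)\le g'$ \texttt{then continue}, else update and re-insert) indeed permits.
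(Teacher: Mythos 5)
The paper states this lemma without any proof, treating it as a standard consequence of A*'s optimality under admissible heuristics, so there is no in-paper argument to compare yours against; your proposal supplies the missing argument and is essentially correct. Its real value is that it isolates the one genuinely non-standard point: because the effective transition cost $\textit{EarliestReach}(v,u)-g(v)$ depends on the arrival time at $v$ through which reachable interval of $u$ gets selected, the $(w,k)$-LBP is a \emph{time-dependent} shortest-path problem, and the classical A* proof only carries over because \textit{EarliestReach}$(v,u)$ is non-decreasing in $g(v)$ --- arriving earlier at $v$ can never force a later earliest arrival at $u$. This monotonicity does hold here (the intervals at $u$ are disjoint and sorted, so lowering $g(v)+cost(v,u)$ can only move the first admissible interval earlier or keep it, and unrestricted waiting under Relaxation-2 absorbs the slack), and it is exactly what restores the optimal-substructure property that A* needs; you are right to flag it as the crux. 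One imprecision: the invariant in your step (iii), that \emph{every} vertex is popped with its optimal $g$-value, holds for Dijkstra (the $h\equiv 0$ case the paper actually runs) or for consistent heuristics, but not for merely admissible ones; the correct form of the Hart--Nilsson--Raphael argument is that at every stage some vertex on an optimal trajectory sits in OPEN with its optimal $g$-value, whence $d$ cannot be extracted with $f(d)>C^*$. You effectively acknowledge this in your closing remark about re-expansion (which line 9 of Alg.~\ref{lbmp:alg:lbastar} permits, since there is no CLOSED list and updates occur on strict improvement), so the issue is presentational rather than substantive. For a fully rigorous write-up you would also want one sentence on termination (finite graph, $g$-values strictly decrease at each update and each recorded value is realized by a trajectory visiting each vertex--interval pair at most once), but that is routine.
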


\begin{theorem}\label{lbmp:thm:lb}
Given a continuous problem with the minimum cost $C^*$, the cost of the solution trajectory $p$ (denoted as $cost(p)$) computed by LB-A* to any $(w,k)$-LBP satisfies $cost(p) \leq C^*$ (i.e., $cost(p)$ is a lower bound of $C^*$).
\end{theorem}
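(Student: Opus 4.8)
The plan is to obtain Theorem~\ref{lbmp:thm:lb} as an immediate consequence of the two preceding results, Theorem~\ref{lbmp:thm:exist_corresponding_sol} and Lemma~\ref{lbmp:lemma:lb_astar_optimal}, chained through the optimality of the LBP solution in $G$. Since the statement posits a continuous problem with a well-defined minimum cost $C^*$, an optimal continuous trajectory $p^*$ attaining $C^*$ exists, so the hypotheses of Theorem~\ref{lbmp:thm:exist_corresponding_sol} are satisfied.

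First I would apply Theorem~\ref{lbmp:thm:exist_corresponding_sol} to produce a feasible solution $p'$ of the $(w,k)$-LBP in $G$ satisfying $cost(p') \le C^*$. Beyond the inequality itself, the crucial thing this step buys us is that the $(w,k)$-LBP is feasible: it admits at least one $s$-to-$d$ trajectory in $G$ respecting the reachable-interval constraints. Letting $C_{LBP}$ denote the optimal (minimum) cost over all feasible trajectories in $G$, we then have $C_{LBP} \le cost(p') \le C^*$.

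Second, I would invoke Lemma~\ref{lbmp:lemma:lb_astar_optimal}. Assuming LB-A* is run with an admissible heuristic $h$ (which is always available, e.g.\ the trivial $h \equiv 0$ or the Euclidean straight-line underestimate $\|v - d\|/V_{max}$), the lemma guarantees that LB-A* returns a minimum-cost feasible trajectory $p$ in $G$, i.e.\ $cost(p) = C_{LBP}$. Combining the two steps yields $cost(p) = C_{LBP} \le C^*$, which is exactly the claimed lower bound.

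The argument is structurally short, so the only points requiring care lie at the interfaces between the lemmas rather than in any new construction, and none is a substantive obstacle. Specifically, I must confirm that (i) the existence of $p^*$ is genuinely delivered by the hypothesis, so that Theorem~\ref{lbmp:thm:exist_corresponding_sol} applies; (ii) the feasibility established in the first step rules out the ``return Failure'' branch of Alg.~\ref{lbmp:alg:lbastar}, so that LB-A* does output a trajectory to which the inequality can be applied; and (iii) the heuristic used by LB-A* is admissible, since this is precisely the hypothesis under which Lemma~\ref{lbmp:lemma:lb_astar_optimal} certifies optimality of the returned trajectory. The conceptual content lives entirely in Theorem~\ref{lbmp:thm:exist_corresponding_sol} (the bi-level discretization admits an under-costed corresponding trajectory) and in the admissibility-driven optimality of the A*-style search.
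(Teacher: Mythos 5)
Your proposal is correct and follows essentially the same route as the paper: apply Theorem~\ref{lbmp:thm:exist_corresponding_sol} to get a feasible trajectory $p'$ in $G$ with $cost(p')\leq C^*$, then use Lemma~\ref{lbmp:lemma:lb_astar_optimal} to conclude that the LB-A* output $p$ satisfies $cost(p)\leq cost(p')\leq C^*$. The extra interface checks you note (existence of $p^*$, feasibility ruling out the failure branch, admissibility of the heuristic) are sound and only make explicit what the paper leaves implicit.
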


\begin{proof}
By Theorem~\ref{lbmp:thm:exist_corresponding_sol}, there exists a corresponding trajectory $p'$ in $G$ to an optimal trajectory $p^*$ to the continuous problem, and $cost(p')\leq cost(p^*)=C^*$.
By Lemma~\ref{lbmp:lemma:lb_astar_optimal}, LB-A* computes an optimal solution $p$ in $G$ and thus $cost(p) \leq cost(p')$. Therefore, $cost(p) \leq C^*$.
\end{proof}

\subsection{Computational Complexity}\label{lbmp:sec:analysis:compute}

\subsubsection{Graph Size:}
In the $(w,k)$-LBP problem formulation, to discretize the workspace, there are totally $n^2$ cells.
Within each cell, there are $4k$ sub-segments.
Since each sub-segment that is not on the borders of the workspace is shared by two cells, there are totally $2k(n^2+n)$ sub-segments in the workspace.
Thus, the number of vertices in $G$ is $|V|=2k(n^2+n)=O(kn^2)$.

Within a cell, each sub-segment is connected with at most $3k$ other sub-segments.
There are at most $4k\times3k$ edges within a cell.
Since the connectivity between sub-segments within each cell is defined in the same way, only one copy of the edges needs to be stored.
Therefore, graph $G$ requires a storage of size $|V|+|E_{move}|=O(2k(n^2+n) + 12k^2)=O(kn^2 + k^2)$.

{\blue As this work does not assume the shape or the trajectory of the obstacles, it is hard to bound the size of $Itv(G)$ and analyze the computational complexity of $Itv(G)$. We discuss potential future work in Sec.~\ref{lbmp:sec:conclude}.}

\subsubsection{Search Branching Factor:}
As aforementioned, each sub-segment is connected with at most $3k$ other sub-segments within a cell, and each sub-segment (not on the borders of $\mathcal{W}$) is shared by two adjacent cells.
Thus, for each sub-segment, there are at most $6k$ neighbors, which is the branching factor that affects the search efficiency of LB-A*.
As shown in the ensuing section, there is a trade-off between computing tighter lower bounds (increasing $n$ and $k$) and expanding fewer vertices (decreasing $n$ and $k$).

\section{Discussion}\label{lbmp:sec:discussion}
{\blue

\subsection{Edges Between Sub-Segments}\label{lbmp:sec:discuss:edges}
First, we explain the reason for not connecting vertices within the same line segment as mentioned in Sec.~\ref{lbmp:sec:generate_lbp:discretize}.
With Equation~\ref{lbmp:eqn:edge_cost}. the edges connecting any two adjacent vertices have zero cost. If all the adjacent vertices are connected, then the optimum for the lower bounding problem can have zero cost, which is a trivial lower bound.
Additionally, for non-adjacent vertices that lie within the same line segment, there is no need to connect them for the following reason.
With the LBP formulation and LB-A*, we only need to consider the case where there is only one transition inside a cell before leaving it.
In other words, there is no need to consider the case where the trajectory intersects with the same line segment multiple times, since (i) the robot can wait for any amount of time after its arrival at a vertex (due to Relaxation-2), and (ii) the resulting trajectory is still a lower bound based on the proof in Sec.~\ref{lbmp:sec:analysis}.
Based on this observation, the edges between non-adjacent vertices that lie within the same line segment can be omitted, which can help reduce the branching factor during the search.

\subsection{Adding Expansion Constraints}\label{lbmp:sec:discuss:exp_cstr}
As mentioned in the previous sub-section, there is no need to consider the case where the trajectory in $G$ intersects with the same line segment multiple times.
We can leverage this observation to enforce that each cell is traversed for only once during the search, which has the potential to provide a tighter lower bound.
Specifically, the \textit{Neighbor(v)} procedure at line 7 in Alg.~\ref{lbmp:alg:lbastar} is modified as follows.
First, let notation $cell(u,v)$ denote the cell that is traversed by edge $(u,v) \in E_{move}$ (i.e., $LS(u)$ and $LS(v)$ are boundaries of $cell(u,v)$, which determines a unique cell).
When generating the neighbor vertices of $v$ in procedure \textit{Neighbor(v)}, we enforce the constraint that: for each generated neighbor vertex $v'$, $cell(v,v') \neq cell(parent(v),v)$.
In other words, edge $(v,v')$ cannot traverse the same cell as edge $(parent(v),v)$ does.
As a special case, in the first iteration of the search (i.e., when $v=s$ and $parent(s)$ does not exist), we do not enforce this constraint when expanding $s$.

}

\section{Numerical Results}\label{lbmp:sec:result}

\graphicspath{{figures/}}

For all the tests in this work, the workspace is of size $L\times L$ with $L=1$ and $V_{max}=0.03$.
In LB-A*, the expansion step uses the one as described in Sec.~\ref{lbmp:sec:discuss:exp_cstr}, and heuristic values of all vertices are simply set to zero, which are admissible.
{\blue
The \textbf{baseline} approach used for comparison in all the experiments is: (i) constructing a visibility graph~\cite{lozano1979algorithm} among the static obstacles while ignoring all the dynamic obstacles, and (ii) finding a shortest path connecting $s$ and $d$ within the visibility graph.
Please refer to our video (\url{https://youtu.be/wf76WJj7KtQ}) for visualization of the experimental results.
}

\subsection{Experiment 1: Simple Instance with Known $C^*$}

\begin{figure*}[tb]
	\centering
	\includegraphics[width=\linewidth]{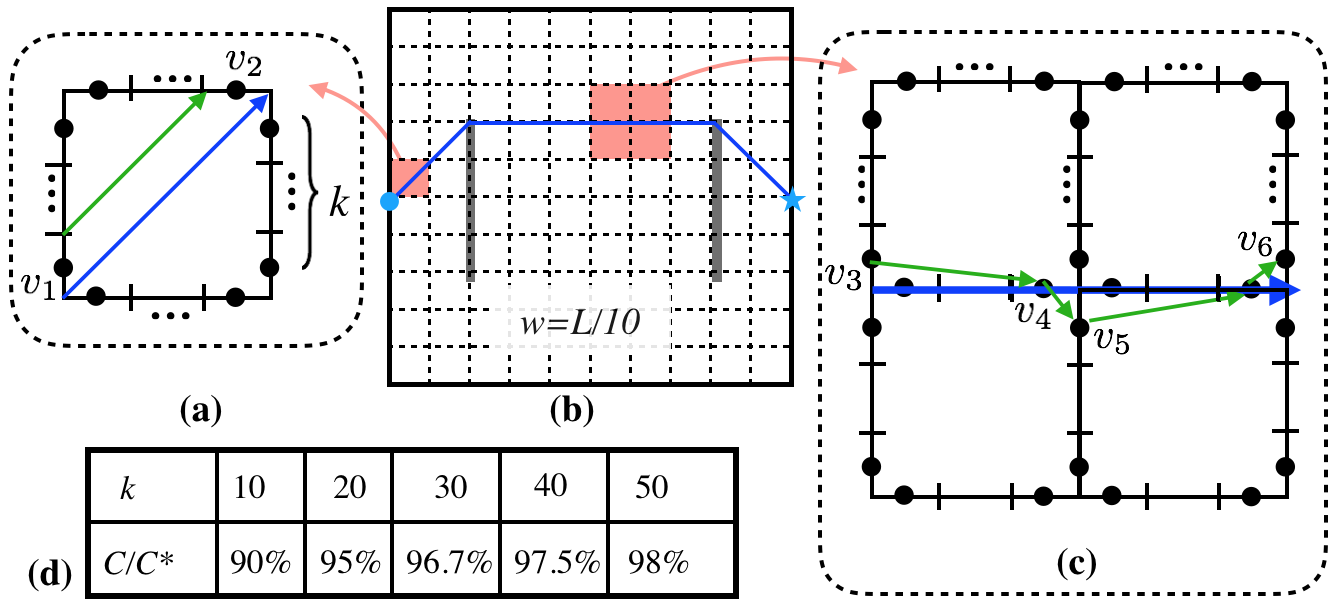}
	\caption{Experiment 1.
	Fig. (b) shows the workspace with two bar-like static obstacles, and an optimal trajectory $p^*$ from the start (the blue dot on the left) to the destination (the blue star on the right).
	Fig. (a) and (c) visualize $p^*$ in blue and the solution (trajectory) computed by LB-A* in green.
	Fig. (d) shows the ratios $C/C^*$ with varying $k$, where $C$ is the cost of the solution computed by LB-A*, and $C^*$ is the cost of $p^*$.
	This figure shows that, by increasing $k$ (the number of sub-segments), the lower bound computed by LB-A* becomes tighter.}
	\label{lbmp:fig:converge}
\end{figure*}

To begin with, we construct a simple test instance as shown in Fig.~\ref{lbmp:fig:converge} (b).
There are two bar-like \emph{static} obstacles with negligible width, and no dynamic obstacles.
Since there is no dynamic obstacle, the solution computed by the baseline $p^*$ is an optimal trajectory.
In Fig.~\ref{lbmp:fig:converge} (a,b,c), $p^*$ is visualized as the blue lines.
The true optimal cost $C^*$ can be calculated, which is $\frac{0.4\sqrt{2}+0.6}{V_{max}}=38.853$.

To verify our approach, a LBP is formulated by discretizing the workspace into $10\times10$ cells (i.e., $n$ is fixed at 10), and each line segment within a cell is divided into $k$ sub-segments, where $k\in\{10,20,30,40,50\}$.
LB-A* is invoked to solve the formulated LBP and finds a trajectory $p$, which is visualized as the green lines in Fig.~\ref{lbmp:fig:converge} (a) and (c).
Let $C_{green},C_{blue}$ denote the cost of the green and the blue trajectories within a cell.
In Fig.~\ref{lbmp:fig:converge} (a), the cost of the edge between vertices $v_1$ and $v_2$ is $C_{green} = \frac{\sqrt{2}(k-1)}{10k}$, while the portion of $p^*$ contained in this cell has cost $C_{blue} = \frac{\sqrt{2}}{10}$, and $\frac{C_{green}}{C_{blue}}=\frac{k-1}{k}$.
Similarly, in Fig.~\ref{lbmp:fig:converge} (c), this ratio can also be calculated as $\frac{C_{green}}{C_{blue}}=\frac{k-1}{k}$ (note that $cost(v_4,v_5)=0$).
The same type of analysis can be applied for each cell along $p^*$, and we have $\frac{cost(p)}{C^*}=\frac{k-1}{k}$.
Thus in this experiment, by increasing $k$, $cost(p)$ converges to $C^*$.
As shown in Fig.~\ref{lbmp:fig:converge} (d), the numerical results output by LB-A* align with the above discussion.

\subsection{Experiment 2: One Dynamic Obstacle}
In the presence of dynamic obstacles, in general, $C^*$ is hard to obtain and the baseline approach is able to compute a lower bound as the dynamic obstacles are ignored.
To obtain feasible solutions (whose costs are upper bounds of $C^*$), we implement two algorithms, SIPP~\cite{phillips2011sipp} and RRT~\cite{lavalle2006planning}.
For SIPP, the workspace is discretized as a $40\times40$ eight-connected grid.
For RRT, we implement the basic version without any improving technique.

We begin with an instance (Fig.~\ref{lbmp:fig:oneObst} (b)) where a circular obstacle of radius $0.25$ moves from the center of the workspace to the left, while the robot moves from the middle point on the left border of the workspace to the middle point on the right border.
In this instance, the shortest path in the visibility graph constructed is simply a straight line connecting $s,d$, whose length is the Euclidean distance between $s,d$.
The lower bound of $C^*$ computed using the baseline approach is $1/0.03=33.3$.

\begin{figure}[tb]
	\centering
	\includegraphics[width=\linewidth]{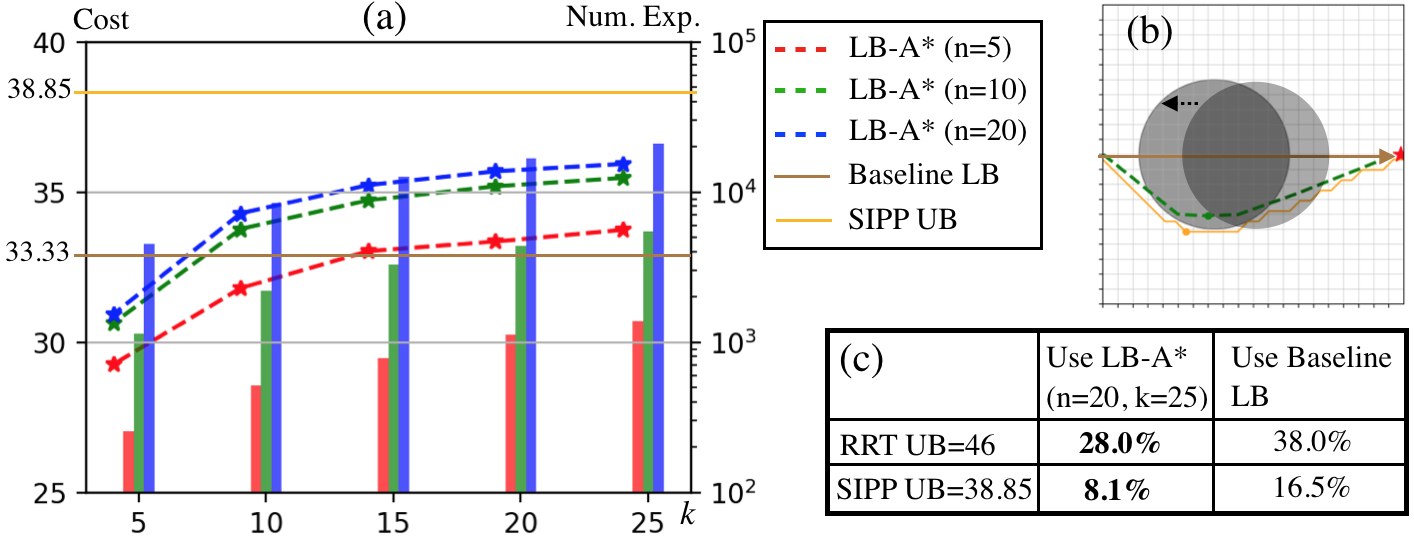}
	\caption{Experiment 2.
	Fig. (b) illustrates the test instance.
	The curves in Fig. (a) show (i) the lower bounds of $C^*$ computed by LB-A* with different $k,n$ parameters, (ii) the lower bound of $C^*$ computed by the baseline approach, (iii) the upper bound of $C^*$ computed by SIPP.
	The bar plot in Fig. (a) shows the number of states expanded by LB-A* during the search with different $k,n$ parameters.
	Fig. (c) compares the optimality bound estimated for the feasible solutions computed by RRT and SIPP. This estimate is computed by using either our approach or the baseline, and our approach can provide up to twice tighter optimality bound estimate (the less the better) than the baseline.}
	\label{lbmp:fig:oneObst}
\end{figure}

Fig.~\ref{lbmp:fig:oneObst} (a) shows the lower bounds computed by LB-A* with varying $n,k$.
When $n,k$ increase, the lower bound becomes larger (i.e., tighter).
Additionally, the lower bounds computed by LB-A* outperform the baseline as $n,k$ increase.
Given a feasible solution with cost $C$ ($C\geq C^*$) and a lower bound $C'$ ($C' \leq C^*$), an estimate of the optimality bound of $C$ can be computed as $\frac{C-C'}{C'}$, which justifies how far $C$ is away from $C^*$ in the worst case. (Note that $\frac{C-C^*}{C^*} \leq \frac{C-C'}{C'}$ since $C' \leq C^*$.)
Fig.~\ref{lbmp:fig:oneObst} (c) shows the estimate provided by using the lower bounds computed by LB-A* for both the feasible solutions computed by SIPP and RRT.
For example, the solution $p$ computed by SIPP has a cost of $38.85$, while the lower bound computed by LB-A* is $35.93$, and $p$ is guaranteed to be less than $\frac{38.85-35.93}{35.93}=8.1\%$ away from $C^*$.
As shown in Fig.~\ref{lbmp:fig:oneObst} (c), in comparison with the baseline, our approach can improve this optimality bound estimate from $16.5\%$ to $8.1\%$ (the less the better).

\subsection{Experiment 3: Dynamic and Static Obstacles}

We then consider an instance as shown in Fig.~\ref{lbmp:fig:tenObst} (b).
There are two bar-like static obstacles (the blue rectangles with negligible width), and there are 10 circular obstacles of radius $0.15$ moving from the center of the workspace along some random trajectories.
In this instance, the solution obtained by the baseline is visualized as the brown dashed lines in Fig.~\ref{lbmp:fig:tenObst} (b), whose cost is 37.16.
The lower bound computed by LB-A* outperforms the baseline when $n$ and $k$ increase, as shown in Fig.~\ref{lbmp:fig:tenObst} (a).
The optimality bound estimated for both the RRT and SIPP solutions using our approach is obviously better than using the baseline (Fig.~\ref{lbmp:fig:tenObst} (c)).

\begin{figure}[t]
	\centering
	\includegraphics[width=\linewidth]{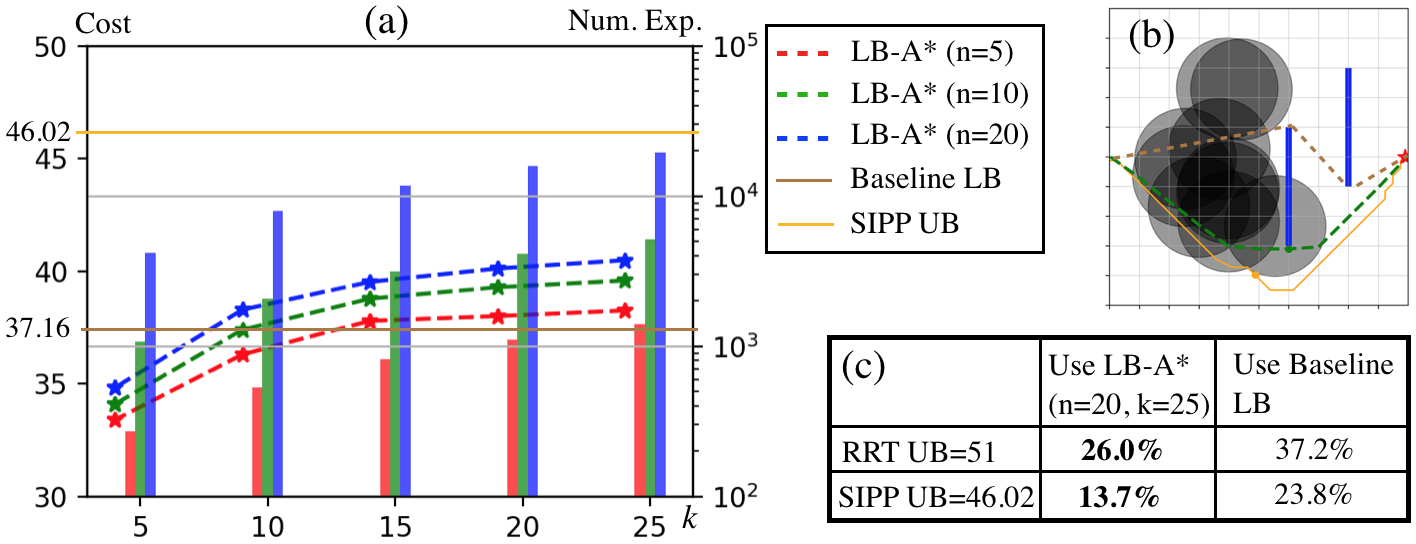}
	\caption{Experiment 3. See the caption of Fig.~\ref{lbmp:fig:oneObst} for details. The optimality bound estimated by using our approach is obviously tighter than using the baseline approach.}
	\label{lbmp:fig:tenObst}
\end{figure}

\subsection{With and Without the Expansion Constraint}
{\blue
So far, the reported results of LB-A* are obtained by enforcing the expansion constraint (Sec.~\ref{lbmp:sec:discuss:exp_cstr}) during the search.
Without the expansion constraint, the lower bounds obtained by LB-A* are slightly looser. For example, let $n=20,k=25$, for Experiment 2, the lower bound obtained without (and with) the expansion constraint is 35.82 (and 35.93 respectively).
It remains an open and challenging question about how to further improve the tightness of the lower bound and we will discuss potential future work in Sec.~\ref{lbmp:sec:conclude}.
}

\subsection{Computational Burden}
In Fig.~\ref{lbmp:fig:oneObst} (a) and Fig.~\ref{lbmp:fig:tenObst} (a), the number of expansions required by LB-A* during the search is shown as bar plots with respect to the right-side vertical axes of the plots.
With larger $n$ and $k$, the generated graph $G$ has more vertices and edges as explained in Sec.~\ref{lbmp:sec:analysis:compute}, which burdens the LB-A* search.
There is a trade-off between the search efficiency and the tightness of the lower bound computed.

{\blue
Finally, if the problem instance is simple (e.g. has few dynamic obstacles, similar to Experiment-1), the baseline approach can probably provide a tight lower bound with little computational effort, and our approach is not advantageous when considering the required computational effort.
}

\section{Conclusion}\label{lbmp:sec:conclude}

This work considers a Motion Planning Problem with Dynamic Obstacles (MPDO), and aims at computing tight lower bounds of the true optimum $C^*$.
To this end, a framework is developed, which consists of two parts: the first part is a bi-level discretization approach to formulate a lower bounding problem (LBP) corresponding to MPDO such that the solution cost to the LBP is guaranteed to be a lower bound of $C^*$; the second part of the framework is a graph search algorithm LB-A* that can solve the formulated LBP to obtain lower bounds.
We analyze and numerically evaluate the framework.
In our experiments, the lower bounds computed by our approach is tighter than using a baseline method.
Consequently, a tighter optimality bound estimate for the feasible solutions computed by SIPP and RRT can be obtained.

\vspace{2mm}
{\blue
\noindent\textbf{Future work} can follow many research directions.
First, one can improve the proposed LBP formulation in this work to reduce the computational complexity and memory usage, or improve the tightness of the lower bounds.
For example, the current relaxation of the obstacle avoidance constraint (i.e., Relaxation-1,2) can be potentially improved when additional knowledge about the obstacle trajectories (e.g. a polynomial of bounded degree~\cite{halperin2017algorithmic}) is known, which can help tighten the computed lower bound or help with the theoretic analysis of the computational complexity.
One can also consider non-uniform discretization~\cite{gochev2011path,ren2017deformed} or sampling-based strategies to adaptively discretize the workspace based on the trajectory of the dynamic obstacles, which have the potential to reduce the computational burden.
Another direction is to develop new planners that can solve the formulated LBP more efficiently by designing informative heuristics, or improving the search algorithm itself (such as the expansion constraint in Sec.~\ref{lbmp:sec:discuss:exp_cstr}).
Finally, one can consider extending the framework to non-Euclidean or high-dimensional spaces.
}

\section*{Acknowledgments}
This material is based upon work supported by the National Science Foundation under Grant No. 2120219 and 2120529. Any opinions, findings, and conclusions or recommendations expressed in this material are those of the author(s) and do not necessarily reflect the views of the National Science Foundation.


\bibliographystyle{plain}
\bibliography{ref}

\begin{thebibliography}{10}

\bibitem{canny1987new}
John Canny and John Reif.
\newblock New lower bound techniques for robot motion planning problems.
\newblock In {\em 28th Annual Symposium on Foundations of Computer Science
  (sfcs 1987)}, pages 49--60. IEEE, 1987.

\bibitem{choset2005principles}
Howie Choset, Kevin~M Lynch, Seth Hutchinson, George~A Kantor, and Wolfram
  Burgard.
\newblock {\em Principles of robot motion: theory, algorithms, and
  implementations}.
\newblock MIT press, 2005.

\bibitem{frazzoli2002real}
Emilio Frazzoli, Munther~A Dahleh, and Eric Feron.
\newblock Real-time motion planning for agile autonomous vehicles.
\newblock {\em Journal of guidance, control, and dynamics}, 25(1):116--129,
  2002.

\bibitem{fujimura1994motion}
Kikuo Fujimura.
\newblock Motion planning amid transient obstacles.
\newblock {\em The International journal of robotics research}, 13(5):395--407,
  1994.

\bibitem{fujimura1993planning}
Kikuo Fujimura and Hanan Samet.
\newblock Planning a time-minimal motion among moving obstacles.
\newblock {\em Algorithmica}, 10(1):41--63, 1993.

\bibitem{gochev2011path}
Kalin Gochev, Benjamin Cohen, Jonathan Butzke, Alla Safonova, and Maxim
  Likhachev.
\newblock Path planning with adaptive dimensionality.
\newblock In {\em Fourth annual symposium on combinatorial search}, 2011.

\bibitem{halperin2017algorithmic}
Dan Halperin, Oren Salzman, and Micha Sharir.
\newblock Algorithmic motion planning.
\newblock In {\em Handbook of Discrete and Computational Geometry}, pages
  1311--1342. Chapman and Hall/CRC, 2017.

\bibitem{astar}
P.~E. {Hart}, N.~J. {Nilsson}, and B.~{Raphael}.
\newblock A formal basis for the heuristic determination of minimum cost paths.
\newblock {\em IEEE Transactions on Systems Science and Cybernetics},
  4(2):100--107, 1968.

\bibitem{hershberger1999optimal}
John Hershberger and Subhash Suri.
\newblock An optimal algorithm for euclidean shortest paths in the plane.
\newblock {\em SIAM Journal on Computing}, 28(6):2215--2256, 1999.

\bibitem{hsu2002randomized}
David Hsu, Robert Kindel, Jean-Claude Latombe, and Stephen Rock.
\newblock Randomized kinodynamic motion planning with moving obstacles.
\newblock {\em The International Journal of Robotics Research}, 21(3):233--255,
  2002.

\bibitem{kant1986toward}
Kamal Kant and Steven~W Zucker.
\newblock Toward efficient trajectory planning: The path-velocity
  decomposition.
\newblock {\em The international journal of robotics research}, 5(3):72--89,
  1986.

\bibitem{latombe2012robot}
Jean-Claude Latombe.
\newblock {\em Robot motion planning}, volume 124.
\newblock Springer Science \& Business Media, 2012.

\bibitem{lavalle2006planning}
Steven~M LaValle.
\newblock {\em Planning algorithms}.
\newblock Cambridge university press, 2006.

\bibitem{lozano1979algorithm}
Tom{\'a}s Lozano-P{\'e}rez and Michael~A Wesley.
\newblock An algorithm for planning collision-free paths among polyhedral
  obstacles.
\newblock {\em Communications of the ACM}, 22(10):560--570, 1979.

\bibitem{maheshwari2020shortest}
Anil Maheshwari, Arash Nouri, and J{\"o}rg-R{\"u}diger Sack.
\newblock Shortest paths among transient obstacles.
\newblock {\em Journal of Combinatorial Optimization}, pages 1--39, 2020.

\bibitem{phillips2011sipp}
Mike Phillips and Maxim Likhachev.
\newblock Sipp: Safe interval path planning for dynamic environments.
\newblock In {\em 2011 IEEE International Conference on Robotics and
  Automation}, pages 5628--5635. IEEE, 2011.

\bibitem{ren2017deformed}
Zhongqiang Ren, Chaohui Gong, and Howie Choset.
\newblock Deformed state lattice planning.
\newblock In {\em 2017 IEEE/RSJ International Conference on Intelligent Robots
  and Systems (IROS)}, pages 6307--6312. IEEE, 2017.

\end{thebibliography}

%
%
%
%
%

\end{document}